\newcommand{\prl}[1]{\left(#1\right)}
\newcommand{\crl}[1]{\left\{#1\right\}}
\newcommand{\norm}[1]{\left\lVert#1\right\rVert}
\newcommand{\Int}{\operatorname{Int}}
\newcommand{\scaleMathLine}[2][1]{\resizebox{#1\linewidth}{!}{$\displaystyle{#2}$}}
\newtheorem{proposition}{Proposition}
\theoremstyle{definition}
\newtheorem{definition}{Definition}
\newtheorem*{problem*}{Problem}
\newtheorem*{remark*}{Remark}
\newcommand{\calD}{{\cal D}}
\newcommand{\calF}{{\cal F}}
\newcommand{\calO}{{\cal O}}
\newcommand{\calP}{{\cal P}}
\newcommand{\calQ}{{\cal Q}}
\newcommand{\calS}{{\cal S}}
\newcommand{\calU}{{\cal U}}
\newcommand{\calX}{{\cal X}}
\newcommand{\calY}{{\cal Y}}
\newcommand{\bfp}{\mathbf{p}}
\newcommand{\bfq}{\mathbf{q}}
\newcommand{\bfx}{\mathbf{x}}
\newcommand{\bftheta}{\boldsymbol{\theta}}
\newcommand{\bbR}{\mathbb{R}}
\title{\LARGE \bf Learning Barrier Functions with Memory for Robust Safe Navigation}
\author{Kehan Long$^{*}$ \quad Cheng Qian$^{*}$ \quad Jorge Cort{\'e}s \quad Nikolay Atanasov%
\thanks{$^*$ indicates equal contribution.}%
\thanks{We gratefully acknowledge support from NSF RI IIS-2007141.}%
\thanks{The authors are with the Contextual Robotics Institute, University of California San Diego, La Jolla, CA 92093, USA. {\tt\small \{k3long,chqian,cortes,natanasov\}@ucsd.edu}}%
}
\begin{document}

\maketitle


\begin{abstract}
    Control barrier functions are widely used to enforce safety properties in robot motion planning and control. However, the problem of constructing barrier functions online and synthesizing safe controllers that can deal with the associated uncertainty has received little attention. This paper investigates safe navigation in unknown environments, using on-board range sensing to construct control barrier functions online. To represent different objects in the environment, we use the distance measurements to train neural network approximations of the signed distance functions incrementally with replay memory. This allows us to formulate a novel robust control barrier safety constraint which takes into account the error in the estimated distance fields and its gradient. Our formulation leads to a second-order cone program, enabling safe and stable control synthesis in a prior unknown environments.
\end{abstract}


\section{Introduction}

Modern applications of ground, aerial, and underwater mobile robots necessitate safe operation in unexplored and unstructured environments. Planning and control techniques that ensure safety, relying on limited field-of-view observations, are critical for autonomous navigation. The seminal work of Khatib \cite{potential-field} introduced artificial potential fields to enable collision avoidance during not only the motion planning stage but also the real-time control of a mobile robot. Potential fields inspired subsequent work on joint path-planning and control. Rimon and Koditschek \cite{navigation-function} developed navigation functions, special artificial potential functions designed to simultaneously guarantee collision avoidance and stabilization to a desired goal configuration. In the 2000's, barrier certificates were proposed as a general construction to verify safety of closed-loop nonlinear and hybrid systems \cite{barrier-certificate, barrier-hybrid}. Barrier certificates were extended by \cite{wieland2007} to consider control inputs explicitly and enable safe control design. Control barrier functions (CBFs) have become a key technique for encoding safety constraints, and have been successfully employed, along with control Lyapunov functions (CLFs), to encode stability requirements in quadratic program (QP) control synthesis for control-affine nonlinear systems \cite{cbf}.

A common assumption in many CLF-CBF-QP works is that the robot already has complete knowledge of the unsafe regions, encoded in an \emph{a priori known} CBF. However, navigation in unknown environments requires online estimation of unsafe regions using onboard sensing and, hence, a CBF should also be constructed online. This paper considers a mobile robot equipped with a LiDAR scanner and tasked to follow a motion plan, relying on streaming range measurements to ensure safety. 
We focus on constructing CBFs, approximating the shape of the objects in the environment, and introduce corresponding safety constraints in the synthesis of the robot's control policy to enforce safety.


\begin{figure}[t]
  \centering
  \includegraphics[width=1\linewidth]{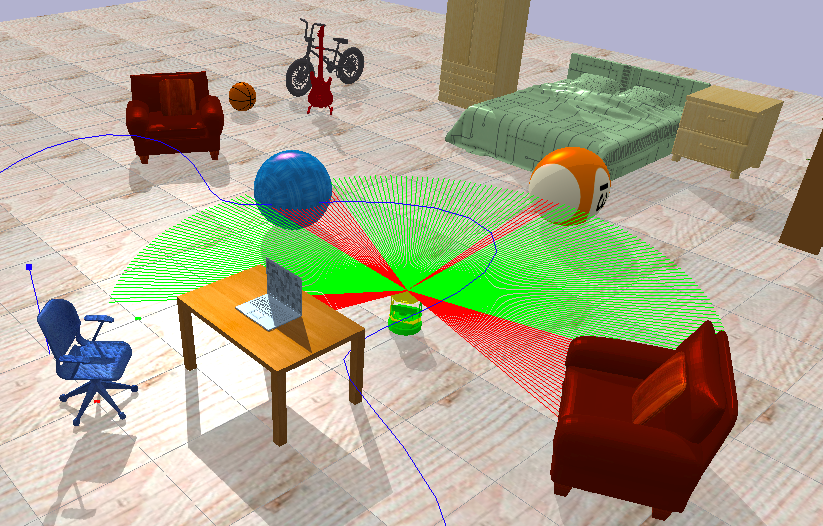}
  \caption{A ground robot equipped with a LiDAR scanner is navigating safely along a desired path (blue curve) in an unknown room.}
  \label{fig:1}
  \vspace*{-3ex}
\end{figure}

The signed distance function (SDF) of a set $\Omega$ in a metric space determines the distance of a given point $\boldsymbol{x}$ to the boundary of $\Omega$, with sign indicating whether $\boldsymbol{x}$ is in $\Omega$ or not. SDFs are an implicit surface representation, employed in computer vision and graphics for surface reconstruction and rendering \cite{deepsdf,Lin2020SDFSRNLS}. In contrast with other object geometry representations, an SDF provides distance and gradient information to object surfaces directly, which is necessary information for collision avoidance in autonomous navigation \cite{luxin2019fiesta}. In this work, we approximate the SDFs of observed objects and define corresponding CBF safety constraints for control synthesis. Using the LiDAR measurements, we incrementally train a multilayer perceptron to model each obstacle's SDF shape online. To balance the accuracy and efficiency of SDF reconstruction, we employ replay memory in training~\cite{prioritized}.  

The obstacle SDF approximations provide distance and gradient information for the barrier functions. We take into account the estimation errors for the controller synthesis, resulting in safety constraints where the input appears both linearly and within
an $l_2$-norm term. The presence of such constraints means that the control synthesis problem can no longer be stated as a QP but we show that the problem can be formulated as a second-order cone program (SOCP), which is still convex and can be solved efficiently. The proposed CLF-CBF-SOCP framework integrates the SDF estimation of the CBFs to synthesize  safe controls at each time instant, allowing the robot to safely navigate the unknown environment. In summary, this paper makes the following contributions. 

\begin{itemize}
    \item An online incremental training approach, utilizing replay memory, is proposed for deep neural network approximation of the signed distance functions of objects observed with streaming distance measurements.
    \item Our analysis shows that incorporating safety constraints that account for the worst-case estimation error of the SDF approximations into a control synthesis optimization problem leads to a convex SOCP formulation. 
\end{itemize}

\section{Related Work}
This section reviews related work on scene representations and CLF-CBF techniques for safe control.

Occupancy mapping techniques aim to partition an environment into occupied and free subsets using sensor observations. Octomap \cite{hornung2013octomap} is a probabilistic occupancy mapping algorithm that uses a tree-based structure to store and update the probability of occupied and free space based on streaming range observations. Occupancy information alone may not be enough for safe planning and control, since many algorithms require distance (or even gradient) information to the obstacle surfaces. Voxblox \cite{oleynikova2017voxblox} is an incremental truncated SDF mapping algorithm that approximates the distance to the nearest surface at a set of weighted support points. The approach also provides an efficient extension from truncated to complete (Euclidean) SDF using an online wavefront propagation algorithm.  
Fiesta \cite{luxin2019fiesta} further improves the efficiency and accuracy of building online SDF maps by using two independent queues for inserting and deleting obstacles separately and doubly linked lists for map maintenance. These SDF representations, however, require discretizing the environment into voxels, resulting in potentially large memory use. Recently, impressive results have been achieved in object shape modeling using deep neural networks. DeepSDF \cite{deepsdf} and Occupancy Nets \cite{occupancynet} implicitly represent 3D shapes by supervised learning via fully connected neural networks. Gropp \textit{et al}. \cite{implicitsurface} further extended the DeepSDF model by introducing the \textit{Eikonal} constraint that any SDF function must satisfy in training the neural network. 
We extend this offline training method in two ways: enabling online training by introducing replay memory, which helps the network to remember the reconstructed shape from past observations, and adding truncated signed distance points to the training set to ensure the correct sign of the approximated SDF. Compared with state-of-the-art SDF methods \cite{oleynikova2017voxblox,luxin2019fiesta} for navigation and obstacle avoidance, which rely on discretization, our approach enables continuous and differentiable SDF representations, as required by the CBF framework.


Quadratic programming with CBF constraints offers an elegant and efficient framework for safe control synthesis in robot navigation tasks \cite{glotfelter2017nonsmooth,xu2017realizing}. This approach employs CLF constraints to stabilize the system and achieve the control objectives whenever possible, whereas the CBF constraints are used to ensure safety of the resulting controller at all times. CBFs are used for safe multi-robot navigation in \cite{glotfelter2017nonsmooth}. CLF and CBF constraints are combined to solve a simultaneous lane-keeping (LK) and adaptive speed regulation (ASR) problem in \cite{xu2017realizing}. CLFs are used to ensure convergence to the control objectives for LK and ASR, whereas CBFs are used to meet safety requirements. However, the barrier functions in these approaches are assumed to be known. When the environment is unknown, a robot may only rely on the estimation of barrier functions using its sensors. A closely related work by Srinivasan \textit{et al}. \cite{srinivasan2020synthesis} presents a Support Vector Machine (SVM) approach for the online synthesis of barrier functions from range data. By approximating the boundary between occupied and free space as the SVM decision boundary, the authors extract CBF constraints and solve a CBF-QP to generate safe control inputs. Our approach constructs CBFs directly from an approximation of the object SDFs and explicitly considers the potential errors in the CBF constraints when formulating the control synthesis optimization problem.

\section{Problem Formulation}
\label{sec:problem}






Consider a robot whose dynamics are governed by a non-linear control-affine system:
\begin{equation}
\label{eq: dynamic}
\begin{aligned}
    &\dot{\boldsymbol{x}} = f(\boldsymbol{x}) + g(\boldsymbol{x}) \boldsymbol{u} \\
    &  \boldsymbol{y} = v(\boldsymbol{x})
\end{aligned}
\end{equation}
where $\boldsymbol{x} \in \calX \subseteq \mathbb{R}^{n}$ is the robot state, $\boldsymbol{u} \in \mathcal{U} \subseteq \mathbb{R}^{m}$ is the control input, and $\boldsymbol{y} \in \calY \subseteq \mathbb{R}^{w}$ is the system output. Taking the robot in Sec. \ref{evaluation} as an example, $\boldsymbol{x}$ is the robot position and orientation, $\boldsymbol{u}$ is the linear and angular velocity input, while $\boldsymbol{y}$ is the robot position. Assume that $f : \mathbb{R}^{n} \mapsto \mathbb{R}^{n}$, $g : \mathbb{R}^{n} \mapsto \mathbb{R}^{n \times m}$, and $v : \mathbb{R}^{n} \mapsto \mathbb{R}^{w}$ are continuously differentiable functions. Define the admissible control input space as $\calU \coloneqq \{\boldsymbol{u} \in \mathbb{R}^m \,|\, A \boldsymbol{u} \leq \boldsymbol{b} \}$, where $A \in \mathbb{R}^{k \times m}$ and $\boldsymbol{b} \in \mathbb{R}^k$. The output space $\calY$ is a Euclidean robot workspace (e.g., robot positions), used for collision checking. The state space $\calX$ is partitioned into a closed safe set $\mathcal{S}$ and an open obstacle set $\mathcal{O}$ such that  $\mathcal{S} \cap \mathcal{O} = \emptyset$ and $\calX = \mathcal{S} \cup \mathcal{O}$. 
Correspondingly, the workspace $\calY$ is partitioned into a closed set $v(\mathcal{S})$ and an open set $v(\mathcal{O})$, obtained by applying the output function $v$ to each element in $\calS$ and $\calO$. 
The obstacle set $\mathcal{O}$ is further partitioned into $K$ obstacles, denoted $\mathcal{O}_1, \mathcal{O}_2, \ldots \mathcal{O}_K$, such that $\mathcal{O} = \cup_{i=1}^K \mathcal{O}_i$ and $\mathcal{O}_j \cap \mathcal{O}_k = \emptyset $ for any $0 \leq j,k \leq K$. Each of these sets is defined through a continuously differentiable function $\varphi_i: \mathbb{R}^{w} \mapsto \mathbb{R}$, with gradient $\nabla \varphi_i : \mathbb{R}^{w} \mapsto \mathbb{R}^{w}$. Formally, $\mathcal{O}_i = \{\boldsymbol{x} \in \mathcal{X}\, |\, \varphi_{i}(\boldsymbol{v(x)}) < 0 \}$. Note that $\varphi_i(\boldsymbol{v(x)}) < 0$ if the output $\boldsymbol{y}$ is in the open set $v(\mathcal{O}_i)$ and $\varphi_i(\boldsymbol{v(x)}) = 0$ if $\boldsymbol{y}$ is on its boundary $\partial v(\mathcal{O}_i)$.

The robot is equipped with a range sensor, such as a LiDAR scanner, and aims to follow a desired path, relying on the noisy distance measurements to avoid collisions. A \emph{path} is a piece-wise continuous function $r : [0,1] \mapsto \Int(v(\mathcal{S}))$.
Let $\calF(\boldsymbol{x}) \subset \calY$ be the field of view of the range sensor when the robot is in state $\boldsymbol{x}$. At discrete times $t_k$, for $k \in \mathbb{N}$, the range sensor provides a set of points $\calP_{k,i} = \crl{\bfp_{k,i,j}}_j \subset \calF(\boldsymbol{x}(t_k)) \cap \partial v(\calO_i)$ on the boundary of each obstacle $v(\calO_i)$ which are within its field-of-view. 




\begin{problem*}
Consider the system in~\eqref{eq: dynamic} operating in an unknown environment, i.e., the obstacles sets $\crl{v(\calO_i)}_{i=1}^K$ are unknown a priori. Given a desired path $r$ and noisy range sensor measurements $\mathcal{P}_{k,i}$ for $k \in \mathbb{N}$ and $i = 1,\ldots, K$, design a feedback controller $\boldsymbol{u}(\boldsymbol{x})$ for \eqref{eq: dynamic} that ensures that the robot can move safely along the path $r$, i.e., $\boldsymbol{y}(t) \in v(\calS)$ for all $t$ and $\boldsymbol{y}(t) \to r(1)$ as $t \to \infty$.
\end{problem*}





\section{Obstacle Estimation via Online Signed Distance Function Approximation}\label{sec:estimation}

Throughout the paper, we rely on the concept of signed distance function to describe each unsafe region $\{v(\mathcal{O}_i)\}_{i=1}^K$. For each $i$, the SDF function $\varphi_i: \mathcal{Y} \mapsto \mathbb{R}$ is:
\begin{equation}
\varphi_{i}(\boldsymbol{y}) := \begin{cases}
 -d(\boldsymbol{y},\partial v(\calO_i)), & \boldsymbol{y} \in v(\calO_i), \\
 \phantom{-} d(\boldsymbol{y},\partial v(\calO_i)), &  \boldsymbol{y} \notin v(\calO_i),
 \end{cases}
\end{equation} 
where $d$ denotes the Euclidean distance between a point and a set. In this section, we describe an approach to construct approximations 
to the obstacle SDFs $\varphi_i$ using the point cloud observations $\{\calP_{k,i}\}_k$ at time step $t_k$.


\subsection{Data Pre-processing}
\label{sec:data}

Given the point cloud $\mathcal{P}_{k,i} \subset \mathbb{R}^w$ on the surface of the $i$th obstacle at time $t_k$, we can regard $\mathcal{P}_{k,i}$ as the points on the zero level-set of a distance function. To normalize the scale of the training data, we define the point coordinates with respect to the obstacle centroid. Since the centroid is unknown, we approximate it as the sample mean of the training points $\bar{\bfp}_{k,i} := \frac{1}{m}\sum_{j=1}^{m} \bfp_{k,i,j}$.
The points $\bfp_{k,i,j} - \bar{\bfp}_{k,i}$ are centered around $\bar{\bfp}_{k,i}$ and have a measured distance of $0$ to the obstacle surface $v(\partial \calO_i)$.
Let $c = \|v(\boldsymbol{x}_k)-\bfq_{k,i,j}\|$ be the Euclidean distance between the robot state $\boldsymbol{x}_k := \boldsymbol{x}(t_k)$ and $\bfp_{k,i,j}$. Let $\delta > 0$ be a small positive constant. Define a point $\bfq_{k,i,j} := \frac{\delta}{c} v(\boldsymbol{x}_k) + (1-\frac{\delta}{c}) \bfp_{k,i,j}$ along the LiDAR ray from the output $v(\boldsymbol{x}_k)$ to $\bfp_{k,i,j}$ that is approximately a distance $\delta$ from the obstacle surface $\partial v(\calO_i)$. We call the set $\crl{\bfq_{k,i,j} - \bar{\bfp}_{k,i}}_j$ a truncated SDF point set. The training set for each obstacle $i$ is constructed as a union of the points on the boundary and the truncated SDF points. The training set at time $t_k$ is $\calD_{k,i} := \crl{ (\bfp_{k,i,j} - \bar{\bfp}_{k,i}, 0)} \cup \crl{ (\bfq_{k,i,j} - \bar{\bfp}_{k,i}, \delta)}$.

\subsection{Loss Function}
\label{sec: loss}


Inspired by the recent impressive results on multilayer perceptron approximation of SDF~\cite{deepsdf,implicitsurface}, we introduce a fully connected neural network $\tilde{\varphi}_i(\boldsymbol{y};\bftheta_k)$ with parameters $\bftheta_k$ to approximate the SDF of each observed obstacle $i$ at time step $t_k$. Our approach relies on the fact that the norm of the SDF gradient satisfies the \emph{Eikonal} equation $\norm{\nabla {\varphi}_i(\boldsymbol{y}) } = 1$ in its domain. We use a loss function that encourages $\tilde{\varphi}_i(\boldsymbol{y};\bftheta_k)$ to approximate the measured distances in a training set $\calD$ (distance loss $\ell_i^D$) and to satisfy the Eikonal constraint for set of points $\calD'$ (Eikonal loss $\ell_i^E$). For example, $\tilde{\varphi}_i$ equals $0$ for points on the obstacle surface and equals $\delta$ for the truncated SDF points along the LiDAR rays. The loss function is defined as $\ell_i(\bftheta_k; \calD,\calD') := \ell_i^{D}(\bftheta_k;\calD) + \lambda \ell_i^{E}(\bftheta_k; \calD')$, with a parameter $\lambda > 0$ and:
\begin{equation}
\label{eq:loss}
\begin{aligned}
\ell_i^{D}(\bftheta_k;\calD) &:= \frac{1}{|\calD|} \sum_{(\bfp,d) \in \calD} |\tilde{\varphi}_i(\boldsymbol{p};\bftheta_k) - d|,\\
\ell_i^{E}(\bftheta_k;\calD') &:= \frac{1}{|\calD'|} \sum_{\bfp \in \calD'} \prl{\|\nabla \tilde{\varphi}_i(\boldsymbol{p}; \bftheta_k)\|-1}.
\end{aligned}
\end{equation}
%
The training set $\calD'$ for the Eikonal term can be generated arbitrarily in $\calY$ since $\tilde{\varphi}_i$ needs to satisfy the Eikonal constraint \textcolor{blue}{\emph{almost everywhere} in $\calY$}. In practice, we generate $\calD'$ by sampling points from a mixture of a uniform distribution and Gaussians centered at the points in the distance training set $\calD$ with standard deviation equal to the distance to the $k$-th nearest neighbor ($k = {| \mathcal{D}|}/{2}$). For the distance training set $\calD$, ideally, we should use as much data as possible to obtain an accurate approximation of the SDF $\varphi_i$. For example, at time $t_k$, all observed data may be used as the distance training set $\calD = \cup_{l=0}^k \calD_{l,i}$. However, the online training time becomes longer and longer as the robot receives more and more LiDAR measurements, which makes it impractical for real-time mapping and navigation tasks. We introduce an \say{Incremental Training with Replay Memory} approach in Sec.~\ref{sec: incremental}, which obtains accurate SDF estimates with training times suitable for online learning.

\subsection{Incremental Learning}
\label{sec: incremental}

When an obstacle $i$ is first observed at time $t_0$, we use the data set $\mathcal{D}_{0,i}$ to train the SDF network $\tilde \varphi_i(\boldsymbol{y};\bftheta_0)$. When new observations $\mathcal{D}_{k,i}$ of obstacle $i$ are obtained at $k = 1,2,\dots$, we need to update the network parameters $\bftheta_k$. We first consider an approach that updates $\tilde \varphi_i(\boldsymbol{y};\bftheta_{k-1})$ based on the new data set $\mathcal{D}_{k,i}$ and uses the previous parameters $\bftheta_{k-1}$ as initialization. We call this approach \say{Incremental Training} (IT). Alternatively, we can update $\tilde \varphi_i(\boldsymbol{y};\bftheta_{k-1})$ using all prior data $\calD_i = \cup_{l=0}^k \calD_{l,i}$, observed up to time $t_k$, and $\bftheta_{k-1}$ as initialization. We call this approach \say{Batch Training} (BT).

%
%

The IT approach is efficient because, at each time $t_k$, it uses data sets $\calD_{k,i}$ of approximately constant cardinality, leading to approximately constant update times during online training. However, discarding the old data $\cup_{l=0}^{k-1} \calD_{l,i}$ and using stochastic gradient descent to re-train the network parameters $\bftheta_{k-1}$ on the new data $\calD_{k,i}$ causes degradation in the neural network's ability to represent the old data. In other words, the SDF approximation $\tilde \varphi_i(\boldsymbol{y};\bftheta_k)$ at time $t_k$ is good at approximating the latest observed obstacle surface but the approximation quality degrades at previously observed surfaces, as shown in Fig. \ref{fig:replay_result}. The BT approach does not have this limitation since it uses all data $\cup_{l=0}^k \calD_{l,i}$ for training at time $t_k$ but, as a result, its training time increases over time. Hence, our motivation for introducing an \say{Incremental Training with Replay Memory} (ITRM) approach is to balance the trade-off between SDF estimation error and online training time, making it suitable for real-time robotic tasks. 



Experience replay is an effective and commonly used technique for online reinforcement learning, which enables convergence of stochastic gradient descent to a critical point in policy and value function approximation \cite{experiencereplay,prioritized}. This idea has not been explored for online supervised learning of geometric surfaces. The first contribution of this paper is to use replay memory for online incremental learning of SDF. At each time step $t_{k-1}$, We construct an experience replay memory $\mathcal{Q}_{k-1,i}$ at each time step by utilizing the SDF approximations $\tilde{\varphi}_i(\boldsymbol{y};\bftheta_{k-1})$.

\begin{figure}[t]
\centering
\subcaptionbox{Training data at $k=0$ \label{fig:2a}}{\includegraphics[width=0.33\linewidth]{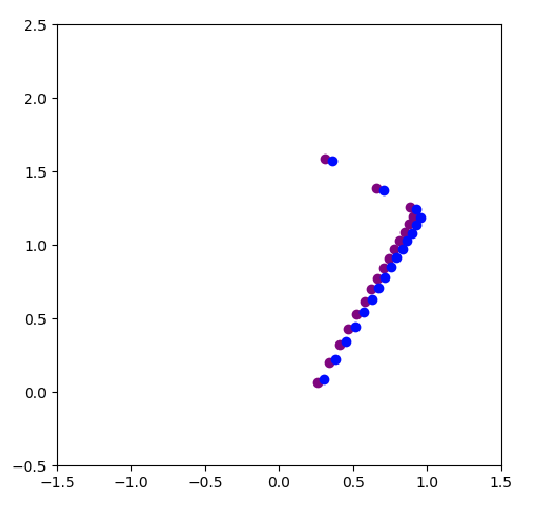}\hspace{0.5ex}}%
\hfill%
\subcaptionbox{IT data at $k=70$ \label{fig:2b}}{\includegraphics[width=0.33\linewidth]{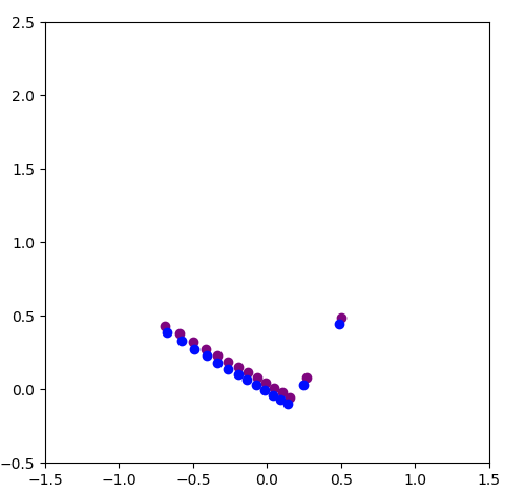}}%
\hfill%
\subcaptionbox{ITRM data at $k=70$ \label{fig:2c}}{\includegraphics[width=0.33\linewidth]{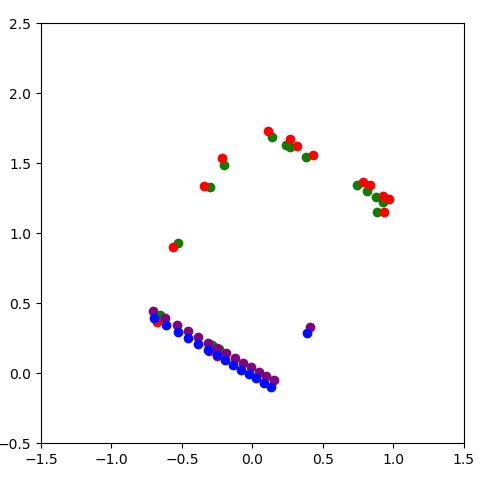}}

\subcaptionbox{Estimation at $k=0$ \label{fig:2d}}{\includegraphics[width=0.33\linewidth]{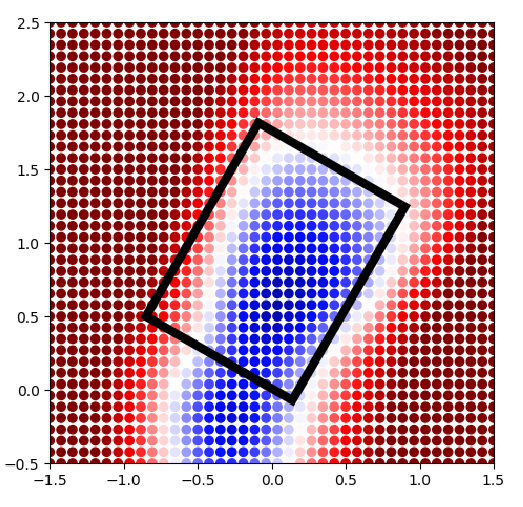}}%
\hfill%
\subcaptionbox{IT estimation at \\$k=70$ \label{fig:2e}}{\includegraphics[width=0.33\linewidth]{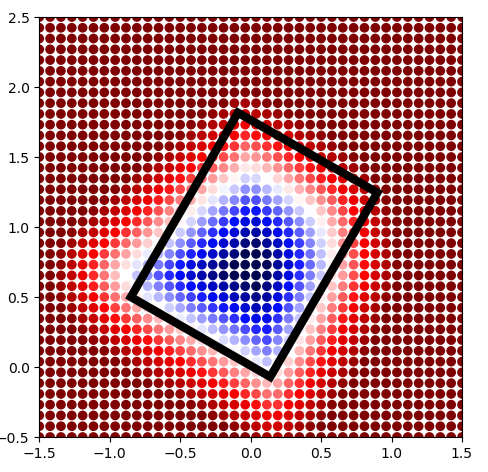}}%
\hfill%
\subcaptionbox{ITRM estimation at \\$k=70$ \label{fig:2f}}{\includegraphics[width=0.33\linewidth]{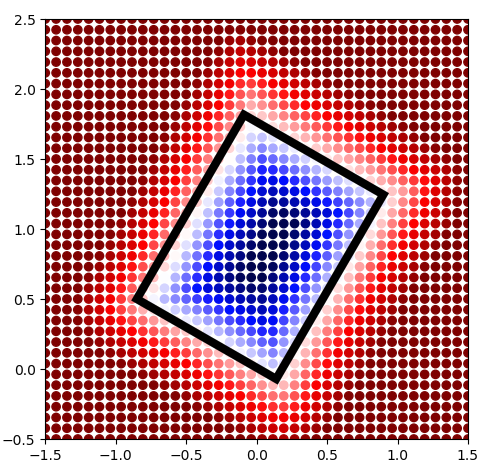}}
\caption{Shape estimation with and without replay memory. The top row shows the training data used at time step $k=0$ and $k=70$ by  the IT and ITRM approaches. The purple points are the observed LiDAR end points, while the blue points are the truncated SDF points along the LiDAR rays. In (c), the green and red points are boundary and truncated SDF points obtained from the replay memory. The bottom row shows the SDF estimation of the obstacle surface at time step $k=0$ and $k=70$ for the IT and ITRM approaches. The black rectangle shows the ground-truth obstacle boundary, while colored regions are level-sets of the SDF estimate. The white region denotes the estimated obstacle boundary. The blue (resp. red) region denotes negative (resp. positive) signed distance. IT and ITRM use the same data and lead to the same estimate at $k=0$ because the replay memory set is empty. In (e), the SDF estimate of the top obstacle region at $k = 70$,~without~memory replay, degrades compared to (d). In (f), training with replay~memory helps the neural network remember the overall obstacle shape.}
\label{fig:replay_result}
\vspace*{-3ex}
\end{figure}

\begin{definition}
The \emph{replay memory} $\calQ$ associated with a signed distance function $\varphi$ and truncation parameter $\tau \geq 0$ is the set of points that are at most a distance $\tau$ from the zero-level set of $\varphi$: $\calQ := \crl{(\bfq,\varphi(\bfq)) \in (\mathbb{R}^w,\mathbb{R}) \mid |\varphi(\bfq)| \leq \tau}$.
\end{definition}

To construct the replay memory set $\mathcal{Q}_{k-1,i}$, we need to generate samples from the level sets of the SDF approximation $\tilde{\varphi}_i(\boldsymbol{y};\bftheta_{k-1})$. 
In robotics applications, where the environments are commonly 2-D or 3-D, samples from the level sets of $\tilde{\varphi}_i(\boldsymbol{y};\bftheta_{k-1})$ can be obtained using the Marching Cubes algorithm~\cite{Lorensen87marchingcubes}. In our experiments in Sec.~\ref{evaluation}, we used Marching Cubes to extract samples $\bfq_0$ and $\bfq_\delta$ from the zero and $\delta$ level-sets of $\tilde{\varphi}_i(\boldsymbol{y};\bftheta_{k-1})$, respectively, and construct the replay memory as $\mathcal{Q}_{k-1,i} := \crl{ (\bfq_0, 0)} \cup \crl{ (\bfq_\delta, \delta)}$.


Given the replay memory $\mathcal{Q}_{k-1,i}$, our ITRM approach constructs a training set at time $t_{k}$ by combining the latest observation $\mathcal{D}_{k,i}$ with a randomly sampled subset $\bar{\calQ}_{k-1,i}$ from $\mathcal{Q}_{k-1,i}$. To make the algorithm efficient without losing significant information about prior data, we let $|\bar{\calQ}_{k-1,i}| = |\mathcal{D}_{k,i}|$, i.e., we pick as many points from the replay memory as there are in the latest observation $\mathcal{D}_{k,i}$. 


The three training techniques, IT, BT, and ITRM, discussed in this section, are formally defined as follows. In all approaches, for obstacle $i$, the parameters $\bftheta_{k-1}$ obtained at the previous time step are used as an initial guess for the new parameters $\bftheta_{k}$ at time $t_k$. The SDF approximations $\tilde \varphi_i(\boldsymbol{y};\bftheta_{k})$ are trained via the loss function in \eqref{eq:loss} but each method uses a different training set $\calD$:
\[
\text{IT: } \calD = \calD_{k,i} \quad \text{BT: } \calD = \cup_{l=0}^k \calD_{l,i} \quad \text{ITRM: } \calD = \mathcal{D}_{k,i} \cup \bar{\calQ}_{k,i}.
\]
We use the continuously differentiable Softplus function $\ln(1+e^x)$ as the non-linear activation to ensure that $\tilde \varphi_i(\boldsymbol{y};\bftheta_k)$ is continuously differentiable. The gradient $\nabla \tilde{\varphi}_i(\boldsymbol{y};\bftheta_k)$ is an input in the loss function in \eqref{eq:loss} and can be calculated via backpropagation \cite{implicitsurface}.

\section{Safe Navigation with Estimated Obstacles}\label{sec:navigation}
We rely on the estimated SDFs constructed in Sec.~\ref{sec:estimation} to formalize the synthesis of a controller that guarantees safety with respect to the exact obstacles, despite errors in the SDF approximation. Our analysis assumes error bounds are available, and we leave their actual computation for future work. In this regard, several recent works study the approximation power and error bounds of neural networks~\cite{Bailey2019ApproximationPO,Yarotsky2017ErrorBF}. In our evaluation in Sec.~\ref{evaluation}, we obtain SDF error bounds by comparing the estimated and ground-truth object SDFs.

\subsection{Safe Control with Estimated Barrier Functions}

%

A useful tool to ensure that the robot state remains in the safe set $\mathcal{S}$ throughout its evolution is the notion of control barrier function (CBF).

\begin{definition}[{\cite{ames2016control}}]
A continuously differentiable function $h: \mathbb{R}^n \mapsto {\mathbb{R}}$, with $h(\boldsymbol{x}) >0$ if $\boldsymbol{x} \in \operatorname{Int}(\mathcal{S})$ and $h(\boldsymbol{x}) =0$ if $\boldsymbol{x} \in \partial \mathcal{S}$, is a \emph{zeroing control barrier function (ZCBF)} on $\mathcal{X} \subset \mathbb{R}^n$ if there exists an extended class $\mathcal{K}$ function $\alpha_h$ such that, for each $\boldsymbol{x} \in \mathcal{X}$, there exists $\boldsymbol{u} \in \mathcal{U}$ with
\begin{equation}\label{eq:cbf}
    \mathcal{L}_f h(\boldsymbol{x}) + \mathcal{L}_g h(\boldsymbol{x})\boldsymbol{u} + \alpha_h (h(\boldsymbol{x})) \geq 0,
\end{equation}
where $\mathcal{L}_f h(\boldsymbol{x})$ is the Lie derivative of $h(\boldsymbol{x})$ along $f(\boldsymbol{x})$.
\end{definition}

Any Lipschitz-continuous controller $\boldsymbol{u}: \mathcal{X} \mapsto \mathcal{U}$ such that $\boldsymbol{u}(\boldsymbol{x})$ satisfies \eqref{eq:cbf} renders the set $\mathcal{S}$ forward invariant for the control-affine system \eqref{eq: dynamic} \cite{cbf, ames2016control}. If the exact SDFs $\varphi_i$ describing the obstacles $\mathcal{O}_i$ were known, we could define ZCBFs $h_i(\boldsymbol{x}) := \varphi_i(v(\boldsymbol{x}))$ that ensure the forward invariance of $\calS$. However, when the environment is observable through online range measurements as described in Sec.~\ref{sec:estimation}, we only have the estimated SDFs $\tilde \varphi_i(v(\boldsymbol{x});\bftheta_k)$ at our disposal to define $\tilde{h}_i(\boldsymbol{x}) := \tilde{\varphi}_i(v(\boldsymbol{x});\bftheta_k)$. Our next result describes how to use this information to ensure the safety of $\mathcal{S}$. The statement is for a generic $\tilde{h}$ (e.g., a single obstacle). We later particularize our discussion to the case of multiple obstacles.

\begin{proposition}\label{prop:well-defined}
Let $e_1(\boldsymbol{x}) := h(\boldsymbol{x}) -\tilde{h}(\boldsymbol{x}) \in \mathbb{R}$ be the error at $\boldsymbol{x}$ in the approximation of $h$, and likewise, let $e_2(\boldsymbol{x}) := \nabla h(\boldsymbol{x}) - \nabla \tilde{h}(\boldsymbol{x}) \in \mathbb{R}^n$ be the error at $\boldsymbol{x}$ in the approximation of its gradient. Assume there are available known functions $e_h(\boldsymbol{x}) : \mathbb{R} \mapsto \mathbb{R}_{\geq 0}$ and $e_{\nabla h}(\boldsymbol{x}): \mathbb{R}^n \mapsto \mathbb{R}_{\geq 0}$ such that
\begin{equation}
\label{eq: error_bound}
\begin{aligned}
    |e_1 (\bfx)| \leq e_h(\boldsymbol{x}) , \quad 
    \|e_2 (\bfx) \|  \leq e_{\nabla h}(\boldsymbol{x})
\end{aligned}
\end{equation}
with $e_h(\boldsymbol{x}) \rightarrow 0$ and $e_{\nabla h}(\boldsymbol{x}) \rightarrow 0$ as $\boldsymbol{x} \rightarrow \partial\mathcal{S}$. Let
\begin{align}
    \mathcal{K}_{\tilde{h}}(\boldsymbol{x}) &:= \{\boldsymbol{u} \in \mathcal{U} \mid \mathcal{L}_{f} \tilde{h}(\boldsymbol{x}) + \mathcal{L}_{g} \tilde{h}(\boldsymbol{x}) \boldsymbol{u}  -
     \label{eq: new_set_h}
    \\
    & \; \|f(\boldsymbol{x}) + g(\boldsymbol{x})\boldsymbol{u}\|e_{\nabla h}(\boldsymbol{x}) + \alpha_h(\tilde{h}(\boldsymbol{x}) - e_h(\boldsymbol{x})) \geq 0\} .
    \notag 
\end{align}
Then, any locally Lipschitz continuous controller $\boldsymbol{u}: \mathcal{X} \mapsto \mathcal{U}$ such that $\boldsymbol{u}(\boldsymbol{x}) \in \mathcal{K}_{\tilde{h}}(\boldsymbol{x})$ guarantees that the safe set $\mathcal{S}$ is forward invariant.
\end{proposition}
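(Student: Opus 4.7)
The plan is to reduce the claim to the classical ZCBF forward-invariance result applied to the \emph{true} barrier $h$. Concretely, I would show that whenever $\boldsymbol{u}(\boldsymbol{x}) \in \mathcal{K}_{\tilde{h}}(\boldsymbol{x})$, the exact CBF inequality
\begin{equation*}
\mathcal{L}_f h(\boldsymbol{x}) + \mathcal{L}_g h(\boldsymbol{x}) \boldsymbol{u}(\boldsymbol{x}) + \alpha_h(h(\boldsymbol{x})) \geq 0
\end{equation*}
holds pointwise. Once this is in hand, the standard forward-invariance theorem for ZCBFs (e.g., \cite{cbf,ames2016control}) applies: local Lipschitzness of $\boldsymbol{u}$ together with continuous differentiability of $\tilde{h}$, and hence of $h$, yields a well-defined closed-loop solution, and the differential inequality $\dot{h}(\boldsymbol{x}(t)) \geq -\alpha_h(h(\boldsymbol{x}(t)))$ combined with a comparison argument gives $h(\boldsymbol{x}(t)) \geq 0$ for all $t \geq 0$ whenever $h(\boldsymbol{x}(0)) \geq 0$, i.e., forward invariance of $\mathcal{S}$.

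The core algebraic step is a linear decomposition of the Lie derivatives using $\nabla h(\boldsymbol{x}) = \nabla \tilde{h}(\boldsymbol{x}) + e_2(\boldsymbol{x})$, which gives
\begin{equation*}
\mathcal{L}_f h(\boldsymbol{x}) + \mathcal{L}_g h(\boldsymbol{x}) \boldsymbol{u} = \mathcal{L}_f \tilde{h}(\boldsymbol{x}) + \mathcal{L}_g \tilde{h}(\boldsymbol{x}) \boldsymbol{u} + e_2(\boldsymbol{x})^\top \bigl(f(\boldsymbol{x}) + g(\boldsymbol{x})\boldsymbol{u}\bigr).
\end{equation*}
Cauchy--Schwarz together with the hypothesis $\|e_2(\boldsymbol{x})\| \leq e_{\nabla h}(\boldsymbol{x})$ lower-bounds the last summand by $-e_{\nabla h}(\boldsymbol{x})\,\|f(\boldsymbol{x}) + g(\boldsymbol{x})\boldsymbol{u}\|$, which is precisely the norm term appearing in the definition \eqref{eq: new_set_h} of $\mathcal{K}_{\tilde{h}}$. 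For the zeroth-order piece, $|e_1(\boldsymbol{x})| \leq e_h(\boldsymbol{x})$ yields $h(\boldsymbol{x}) \geq \tilde{h}(\boldsymbol{x}) - e_h(\boldsymbol{x})$, and monotonicity of the extended class $\mathcal{K}$ function $\alpha_h$ gives $\alpha_h(h(\boldsymbol{x})) \geq \alpha_h(\tilde{h}(\boldsymbol{x}) - e_h(\boldsymbol{x}))$. Summing these two lower bounds and invoking the defining inequality of $\mathcal{K}_{\tilde{h}}(\boldsymbol{x})$ delivers the desired exact CBF inequality.

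The behavior at the boundary deserves separate care: on $\partial \mathcal{S}$ we have $h(\boldsymbol{x}) = 0$, so $\alpha_h(h(\boldsymbol{x})) = 0$ and the exact CBF condition tightens to $\mathcal{L}_f h + \mathcal{L}_g h\,\boldsymbol{u} \geq 0$. The hypotheses $e_h(\boldsymbol{x}),\, e_{\nabla h}(\boldsymbol{x}) \to 0$ as $\boldsymbol{x} \to \partial \mathcal{S}$ ensure that both the shift $\tilde{h}(\boldsymbol{x}) - e_h(\boldsymbol{x})$ collapses to $h(\boldsymbol{x})$ and the robustifying norm term vanishes, so that the inner approximation encoded by $\mathcal{K}_{\tilde{h}}$ remains consistent with the exact safe set up to the boundary and feasibility is not artificially destroyed there.

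I do not expect a serious obstacle in any single step; the main subtlety is to track the input-dependent norm $\|f(\boldsymbol{x}) + g(\boldsymbol{x})\boldsymbol{u}\|$ carefully so that the Cauchy--Schwarz bound enters with the correct sign and genuinely tightens, rather than relaxes, the exact CBF constraint. It is precisely this Euclidean norm term, linear in $\boldsymbol{u}$, that later forces the control-synthesis problem out of the QP class and into the SOCP class emphasized elsewhere in the paper.
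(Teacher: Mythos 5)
Your proposal is correct and follows essentially the same route as the paper's proof: the decomposition $\nabla h = \nabla\tilde h + e_2$, the Cauchy--Schwarz lower bound $e_2^\top(f+g\boldsymbol{u}) \geq -e_{\nabla h}\|f+g\boldsymbol{u}\|$, and monotonicity of the extended class $\mathcal{K}$ function applied to $h \geq \tilde h - e_h$ are exactly the worst-case bounds the paper uses to show membership in $\mathcal{K}_{\tilde h}(\boldsymbol{x})$ implies the exact ZCBF inequality for $h$. Your phrasing as a direct implication (rather than the paper's min/max formulation) and your explicit appeal to the comparison argument are only presentational differences.
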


\begin{proof}
We start by substituting $h(\boldsymbol{x}) = \tilde{h}(\boldsymbol{x}) + e_1(\boldsymbol{x})$ in \eqref{eq:cbf}:
\begin{multline*}
\mathcal{L}_{f} \tilde{h}(\boldsymbol{x}) + \mathcal{L}_{g} \tilde{h}(\boldsymbol{x}) \boldsymbol{u} + e_2(\boldsymbol{x})^\top f(\boldsymbol{x}) + e_2(\boldsymbol{x})^\top g(\boldsymbol{x}) \boldsymbol{u} \\\geq -\alpha_h(\tilde{h}(\boldsymbol{x}) + e_1(\boldsymbol{x}) ).
\end{multline*}
For any fixed $\boldsymbol{x}$ and any errors $e_1(\boldsymbol{x})$ and $e_2(\boldsymbol{x})$ satisfying \eqref{eq: error_bound}, we need the minimum value of the left-hand side greater than the maximum value of the right-hand side to ensure that \eqref{eq:cbf} still holds, namely:
\begin{equation}
\label{eq:errorh}
\scaleMathLine[0.9]{\begin{aligned}
    \min_{\|e_{2}(\boldsymbol{x})\| \leq e_{\nabla h}(\boldsymbol{x})} \Big\{ \mathcal{L}_{f} \tilde{h}(\boldsymbol{x}) + \mathcal{L}_{g} \tilde{h}(\boldsymbol{x}) \boldsymbol{u} + e_2(\boldsymbol{x})^\top f(\boldsymbol{x}) + & \\e_2(\boldsymbol{x})^\top g(\boldsymbol{x}) \boldsymbol{u}\Big\}\geq \max_{|e_{1}(\boldsymbol{x})| \leq e_{h}(\boldsymbol{x})} \Big\{-\alpha_h(\tilde{h}(\boldsymbol{x}) + e_1(\boldsymbol{x}))\Big\}.& 
\end{aligned}}
\end{equation}
Note that since $e_h(\boldsymbol{x}) \geq 0$ and $\alpha_h$ is an extended class $\mathcal{K}$ function, the maximum value in \eqref{eq:errorh} is obtained by:
\begin{equation*}
    \max_{|e_{1}(\boldsymbol{x})| \leq e_{h}(\boldsymbol{x})} -\alpha_h(\tilde{h}(\boldsymbol{x}) + e_1((\boldsymbol{x})))= -\alpha_h(\tilde{h}(\boldsymbol{x}) - e_h(\boldsymbol{x})).
\end{equation*}
The minimum value is attained when $e_2(\boldsymbol{x})$ is in the opposite direction to the gradient of $f(\boldsymbol{x}) + g(\boldsymbol{x})\boldsymbol{u}$, namely
\begin{align*}
    \min_{\|e_{2}(\boldsymbol{x})\| \leq e_{\nabla h}(\boldsymbol{x})} \Big\{ e_2(\boldsymbol{x})^\top f(\boldsymbol{x}) + & e_2(\boldsymbol{x})^\top g(\boldsymbol{x}) \boldsymbol{u}\Big\} = \\ &-\|f(\boldsymbol{x}) + g(\boldsymbol{x})\boldsymbol{u}\|e_{\nabla h}(\boldsymbol{x}).
\end{align*}
Therefore, the inequality condition~\eqref{eq:errorh} can be rewritten as
$   \mathcal{L}_{f} \tilde{h}(\boldsymbol{x}) + \mathcal{L}_{g} \tilde{h}(\boldsymbol{x}) \boldsymbol{u} - \|f(\boldsymbol{x}) + g(\boldsymbol{x})\boldsymbol{u}\|e_{\nabla h}(\boldsymbol{x}) \geq -\alpha_h(\tilde{h}(\boldsymbol{x}) - e_h(\boldsymbol{x}))
$, which is equivalent to the condition in~\eqref{eq: new_set_h}.
\end{proof}

Proposition~\ref{prop:well-defined} allows us to synthesize safe controllers even though the obstacles are not exactly known, provided that error bounds on the approximation of the barrier function and its gradient are available and get better as the robot state gets closer to the boundary of the safe set $\mathcal{S}$.




\subsection{Control Synthesis via Second-Order Cone Programming}
\label{sec:socp_frame}


We encode the control objective using the notion of a Lyapunov function. Formally, we assume the existence of a control Lyapunov function, as defined next.

\begin{definition}[{\cite{cbf}}]
A \emph{control Lyapunov function (CLF)} for the system dynamics in \eqref{eq: dynamic} is a continuously differentiable function $V: \mathbb{R}^n \mapsto {\mathbb{R}_{\geq 0}}$ for which there exist a class $\mathcal{K}$ function $\alpha_V$ such that, for all $\boldsymbol{x} \in \mathcal{X}$:
\begin{align*}
    \inf_{\boldsymbol{u}\in \mathcal{U}}[ \mathcal{L}_f V(\boldsymbol{x}) + \mathcal{L}_g V(\boldsymbol{x})\boldsymbol{u} + \alpha_V( V(\boldsymbol{x}))  ] \leq 0 .
\end{align*}
\end{definition}

The function $V$ may be used to encode a variety of control objectives, including for instance path following. 
We present a specific Lyapunov function for this purpose in Sec.\ref{evaluation}.

When the barrier function is precisely known, one can combine CLF and CBF constraints to synthesize a safe controller via the following QP:
\begin{equation}
\label{eq:QP_origin}
\begin{aligned}
& \min_{\boldsymbol{u} \in \calU,\delta \in \bbR}\,\, \|L(\boldsymbol{x})^\top(\boldsymbol{u} - \tilde{\boldsymbol{u}}(\boldsymbol{x}))\|^2 + \lambda \delta^2\\
\mathrm{s.t.} \, \,  &\mathcal{L}_f V(\boldsymbol{x}) + \mathcal{L}_g V(\boldsymbol{x})  \boldsymbol{u} + \alpha_V(V(\boldsymbol{x})) \leq \delta \\
&\mathcal{L}_{f} h(\boldsymbol{x}) + \mathcal{L}_{g} h(\boldsymbol{x})\boldsymbol{u} + \alpha_h(h(\boldsymbol{x})) \geq 0,
\end{aligned}
\end{equation}
where $\tilde{\boldsymbol{u}}(\boldsymbol{x})$ is a baseline controller, $L(\bfx)$ is a matrix penalizing control effort, and $\delta \geq 0$ (with the corresponding penalty $\lambda$) is a slack variable, introduced to relax the CLF constraints in order to ensure the feasibility of the QP. The baseline controller $\tilde{\boldsymbol{u}}(\boldsymbol{x})$ is used to specify additional control requirements such as desirable velocity or orientation (see Sec.~\ref{evaluation}) but may be set to $\tilde{\boldsymbol{u}}(\boldsymbol{x}) \equiv 0$ if minimum control effort is the main objective. The QP formulation in \eqref{eq:QP_origin} modifies $\tilde{\boldsymbol{u}}(\boldsymbol{x})$ online to ensure safety and stability via the CBF and CLF constraints.

Without exact knowledge of the barrier function $h$, we need to replace the CLF constraint in \eqref{eq:QP_origin} by \eqref{eq: new_set_h}:
%
\begin{equation}
\label{eq:QP_tobeconvert}
\begin{aligned}
& \min_{\boldsymbol{u} \in \calU,\delta \in \mathbb{R}}\,\,  \|L(\boldsymbol{x})^\top(\boldsymbol{u} - \tilde{\boldsymbol{u}}(\boldsymbol{x}))\|^2 + \lambda \delta^2 \\
\text{s.t.} \, \, &\mathcal{L}_f V(\boldsymbol{x}) + \mathcal{L}_g V(\boldsymbol{x})  \boldsymbol{u} + \alpha_V(V(\boldsymbol{x})) \leq \delta \\
&\mathcal{L}_{f} \tilde{h}(\boldsymbol{x}) + \mathcal{L}_{g} \tilde{h}(\boldsymbol{x})\boldsymbol{u}+ \alpha_h(\tilde{h}(\boldsymbol{x}) - e_h(\boldsymbol{x}))  \\
&\phantom{\mathcal{L}_{f} \tilde{h}(\boldsymbol{x}) + \mathcal{L}_{g} \tilde{h}(\boldsymbol{x})\boldsymbol{u} + } \geq \|f(\boldsymbol{x}) + g(\boldsymbol{x})\boldsymbol{u}\| \,e_{\nabla h}(\boldsymbol{x}).
\end{aligned}
\end{equation}
This makes the optimization problem in \eqref{eq:QP_tobeconvert} no longer a quadratic program. However, the following result shows that~\eqref{eq:QP_tobeconvert} is a (convex) second-order cone program (SOCP).

\begin{proposition}
\label{pro:socp}
The optimization problem in \eqref{eq:QP_tobeconvert} is equivalent to the following second-order cone program: %
\begin{equation}
\label{eq:SOCP_formulation}
\begin{aligned}
    & \min_{\boldsymbol{u} \in \calU, \delta\in \mathbb{R},l\in \mathbb{R} } \, \, l \\
    \mathrm{s.t.} \, \, &\mathcal{L}_f V(\boldsymbol{x}) + \mathcal{L}_g V(\boldsymbol{x})  \boldsymbol{u} + \alpha_V(V(\boldsymbol{x})) \leq \delta \\
    &\|f(\boldsymbol{x}) + g(\boldsymbol{x})\boldsymbol{u}\| \,e_{\nabla h}(\boldsymbol{x}) \leq  \mathcal{L}_{f} \tilde{h}(\boldsymbol{x}) + \mathcal{L}_{g} \tilde{h}(\boldsymbol{x})\boldsymbol{u}\\ &+ \alpha_h(\tilde{h}(\boldsymbol{x}) - e_h(\boldsymbol{x})) \\
    &\Bigg\| \begin{bmatrix} 2L(\boldsymbol{x})^\top(\boldsymbol{u}-\tilde{\boldsymbol{u}}(\boldsymbol{x}))\\ 2\sqrt{\lambda} \delta \\ l-1 \end{bmatrix} \Bigg\| \leq l+1.
\end{aligned}
\end{equation}
\end{proposition}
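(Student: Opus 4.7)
The plan is to recognize \eqref{eq:QP_tobeconvert} as an optimization whose objective is a convex quadratic in $(\boldsymbol{u},\delta)$ and whose constraints can all be recast as affine or second-order cone (SOC) constraints, and then apply the standard epigraph reformulation to obtain \eqref{eq:SOCP_formulation}. First, I would address the CBF constraint: the term $\|f(\boldsymbol{x}) + g(\boldsymbol{x})\boldsymbol{u}\|\,e_{\nabla h}(\boldsymbol{x})$ is a norm of an affine function of $\boldsymbol{u}$ scaled by the nonnegative scalar $e_{\nabla h}(\boldsymbol{x})$, while its right-hand side $\mathcal{L}_f \tilde h(\boldsymbol{x}) + \mathcal{L}_g \tilde h(\boldsymbol{x})\boldsymbol{u} + \alpha_h(\tilde h(\boldsymbol{x}) - e_h(\boldsymbol{x}))$ is affine in $\boldsymbol{u}$. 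Hence it is already in the canonical SOC form $\|A\boldsymbol{u}+b\| \leq c^\top \boldsymbol{u} + d$, and it appears unchanged as the second constraint of \eqref{eq:SOCP_formulation}. The CLF constraint and the admissibility constraints $\boldsymbol{u}\in\calU$ are linear and carry over directly.

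Next, I would handle the quadratic objective by epigraph form: introduce a slack $l\in\mathbb{R}$ and replace the objective by minimizing $l$ subject to $\|L(\boldsymbol{x})^\top(\boldsymbol{u}-\tilde{\boldsymbol{u}}(\boldsymbol{x}))\|^2 + \lambda\delta^2 \leq l$. Any optimal $(\boldsymbol{u}^\star,\delta^\star)$ of \eqref{eq:QP_tobeconvert} extends to an optimal triple of the epigraph problem by taking $l^\star$ equal to the optimal objective value, and conversely any optimal triple of the epigraph problem yields feasible $(\boldsymbol{u}^\star,\delta^\star)$ of \eqref{eq:QP_tobeconvert} with matching objective. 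The non-routine step is to rewrite the quadratic epigraph constraint as an SOC constraint; I would use the identity that for any $y\in\mathbb{R}^d$ and $l\in\mathbb{R}$,
\begin{equation*}
\|y\|^2 \leq l \;\iff\; \left\|\begin{bmatrix} 2y \\ l-1 \end{bmatrix}\right\| \leq l+1 ,
\end{equation*}
which follows by squaring both sides of the right-hand inequality (valid since feasibility forces $l\geq 0$, hence $l+1>0$) and simplifying: $(l+1)^2 - (l-1)^2 = 4l$ matches $4\|y\|^2$. Applying this with $y = \big[L(\boldsymbol{x})^\top(\boldsymbol{u}-\tilde{\boldsymbol{u}}(\boldsymbol{x})); \sqrt{\lambda}\,\delta\big]$ reproduces exactly the third constraint of \eqref{eq:SOCP_formulation}.

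Combining these observations, problem \eqref{eq:SOCP_formulation} has a linear objective, linear inequality constraints (CLF and $\boldsymbol{u}\in\calU$), and two SOC constraints; it is therefore an SOCP by definition, and the equivalence with \eqref{eq:QP_tobeconvert} follows from the two-way correspondence of optima established by the epigraph construction. The main (minor) obstacle is simply verifying the algebraic identity above and noting that minimization over $l$ drives the epigraph inequality to equality at the optimum, so that the optimal value of $l$ coincides with the optimal value of the original quadratic objective.
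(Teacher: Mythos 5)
Your proposal is correct and follows essentially the same route as the paper: introduce $l$ via the epigraph reformulation, observe that the robust CBF constraint is already a second-order cone constraint, and convert the quadratic epigraph constraint into the standard SOC form via the identity $(l+1)^2-(l-1)^2=4l$ (the paper phrases this as converting a rotated second-order cone with $y_r=l$, $z_r=1$, but the algebra is identical). Your explicit remark that feasibility forces $l\geq 0$ so the squaring step is valid is a small detail the paper leaves implicit.
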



\begin{proof}
We first introduce a new variable $l$ so that the problem in \eqref{eq:QP_tobeconvert} is equivalent to
\begin{equation}
\label{eq:SOCP}
\begin{aligned}
    & \min_{\boldsymbol{u} \in \calU, \delta\in \mathbb{R},l\in \mathbb{R} } \, \, l \\
    \text{s.t.} \, \, &\mathcal{L}_f V(\boldsymbol{x}) + \mathcal{L}_g V(\boldsymbol{x}) \boldsymbol{u} + \alpha_V(V(\boldsymbol{x})) \leq \delta \\
    &\|f(\boldsymbol{x}) + g(\boldsymbol{x}) \boldsymbol{u}\| \,e_{\nabla h}(\boldsymbol{x}) \leq  \mathcal{L}_{f} \tilde{h}(\boldsymbol{x}) + \mathcal{L}_{g} \tilde{h}(\boldsymbol{x})\boldsymbol{u}\\ &+ \alpha_h(\tilde{h}(\boldsymbol{x}) - e_h(\boldsymbol{x})) \\
    &\|L(\boldsymbol{x})^\top(\boldsymbol{u}-\tilde{\boldsymbol{u}}(\boldsymbol{x}))\|^2 + \lambda \delta^2 \leq l.
\end{aligned}
\end{equation}
%
The last constraint in \eqref{eq:SOCP} corresponds to a rotated second-order cone, $\mathcal{Q}^n_{rot} \coloneqq \{(\boldsymbol{x}_{r},y_{r},z_{r}) \in \mathbb{R}^{n+2}\, | \,\|\boldsymbol{x}_{r}\|^2 \leq y_{r}z_{r}, y_{r} \geq 0, z_{r} \geq 0 \}$, which can be converted into a standard SOC constraint~\cite{alizadeh2003second}:
\[
\left\| \begin{bmatrix} 2 \boldsymbol{x}_{r} \\ y_r - z_r\end{bmatrix} \right\| \leq y_r+z_r.
\]
Let $y_r = l$, $z_r = 1$ and consider the constraint $\|L(\boldsymbol{x})^\top(\boldsymbol{u}-\tilde{\boldsymbol{u}}(\boldsymbol{x}))\|^2 + \lambda \delta^2 \leq l$.
%
%
Multiplying both sides by $4$ and adding $l^2+1$, makes the constraint equivalent to 
\[
4\| L(\boldsymbol{x})^\top (\boldsymbol{u}-\tilde{\boldsymbol{u}}(\boldsymbol{x})) \|^2 + 4\lambda\delta^2 + (l-1)^2 \leq (l+1)^2.
\]
Taking a square root on both sides, we end up with $\sqrt{ \|2L(\boldsymbol{x})^\top (\boldsymbol{u}-\tilde{\boldsymbol{u}}(\boldsymbol{x}))\|^2 + (2\sqrt{\lambda}\delta)^2 + (l-1)^2} \leq l+1$, which is equivalent to the third constraint in \eqref{eq:SOCP_formulation}.
\end{proof}

For multiple obstacles in the environment, one can  add multiple CBF constraints to \eqref{eq:SOCP_formulation}. We leave for future work the characterization of the Lipschitz continuity properties of the controller resulting from \eqref{eq:SOCP_formulation}.

\section{Evaluation}
\label{evaluation}

We use the PyBullet simulator \cite{coumans2020} to evaluate our approach for online shape estimation and safe navigation. We use a TurtleBot, equipped with a LiDAR scanner with a $270$ degree field of view, $150$ rays per scan, $3$ meter range, and zero-mean Gaussian measurement noise with standard deviation $\sigma = 0.01$. The simulation environments contain obstacles with various shapes, a priori unknown to the robot.


\subsection{Modeling}

We model the robot motion using unicycle kinematics and take a small distance $a \neq 0$ off the wheel axis as in \cite{cortes2017coordinated} to obtain a relative-degree-one model:
\begin{equation}
\label{eq:model2}
\begin{bmatrix} \dot{x} \\ \dot{y} \\ \dot{\theta} \end{bmatrix} = \begin{bmatrix} \psi \cos(\theta) - a\omega \sin(\theta)\\ \psi \sin(\theta) + a\omega \cos(\theta) \\ \omega  \end{bmatrix},
\end{equation}
where $\psi$, $\omega$ represent the robot linear and angular velocity, respectively. The state, input, and output are $\boldsymbol{x} := [x, y,\theta]^\top \in \mathbb{R}^2 \times [-\pi,\pi)$, $\boldsymbol{u} := [\psi,\omega]^\top \in \mathbb{R}^2$, and $\boldsymbol{y} = v(\boldsymbol{x}) := [x, y] \in \mathbb{R}^2$. We use a baseline controller $\tilde{\boldsymbol{u}}(\boldsymbol{x}) \equiv [\psi_{max}, 0]^\top$ to encourage the robot to drive at max velocity $\psi_{max}$ in a straight line whenever possible. The desired robot path is specified by a 2-D curve $r(\gamma)$, $\gamma \in [0,1]$. To capture the path-following task via a CLF constraint, we define $\boldsymbol{\eta}(\boldsymbol{x}) := v(\boldsymbol{x}) - r(\gamma(\boldsymbol{x}))$. The closest point on the reference path $r(\gamma)$ to the robot state $\boldsymbol{x}$ is obtained by $\gamma(\boldsymbol{x}) := \arctan(y/x) /(2\pi) \in [0,1]$. According to~\cite{ames2014rapidly}, the following is a valid CLF for  path following:
\begin{equation}
  \label{eq:clf} 
  V(\boldsymbol{x}) = [\boldsymbol{\eta}^\top(\boldsymbol{x}),\dot{\boldsymbol{\eta}}^\top(\boldsymbol{x})]P[\boldsymbol{\eta}^\top(\boldsymbol{x}),\dot{\boldsymbol{\eta}}^\top(\boldsymbol{x})]^\top, 
\end{equation}
where $P$ is a positive-definite matrix calculated by solving the Lyapunov equation of the input-output linearization.

\begin{figure}[t]
\centering
\subcaptionbox{Ball\label{fig:3a}}{\includegraphics[width=0.235\linewidth]{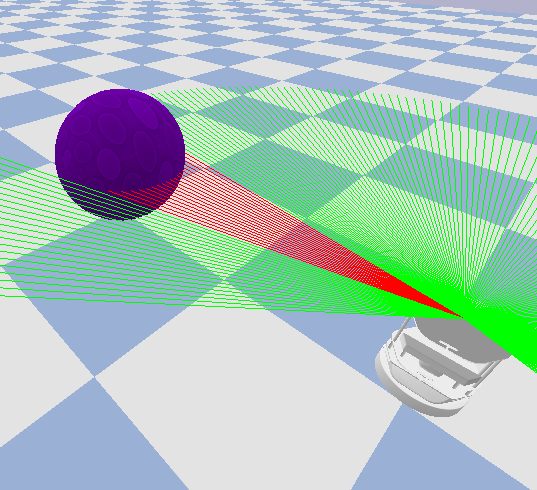}}%
\hfill
\subcaptionbox{Sofa\label{fig:3b}}{\includegraphics[width=0.24\linewidth]{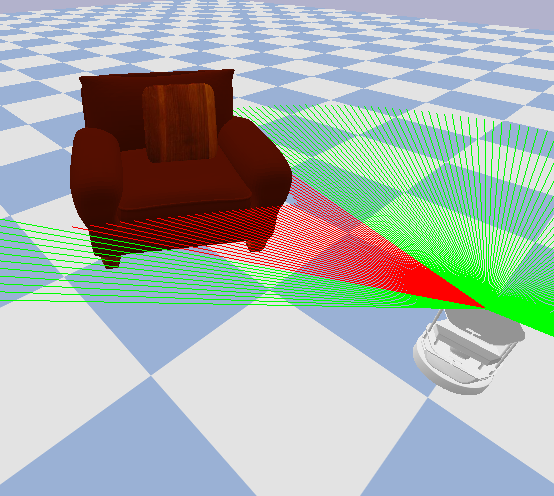}}
\hfill
\subcaptionbox{Table\label{fig:3c}}{\includegraphics[width=0.244\linewidth]{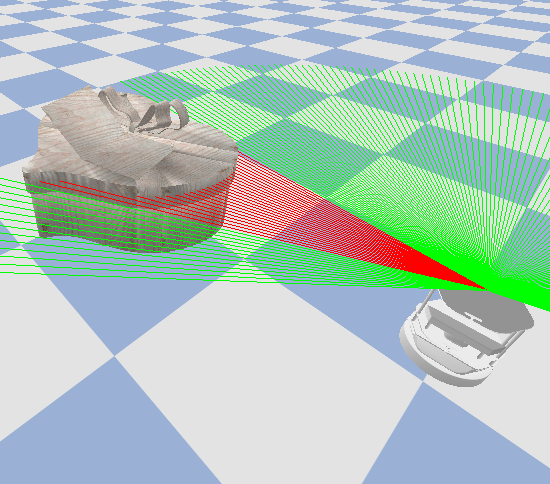}}
\hfill
\subcaptionbox{Toy\label{fig:3d}}{\includegraphics[width=0.24\linewidth]{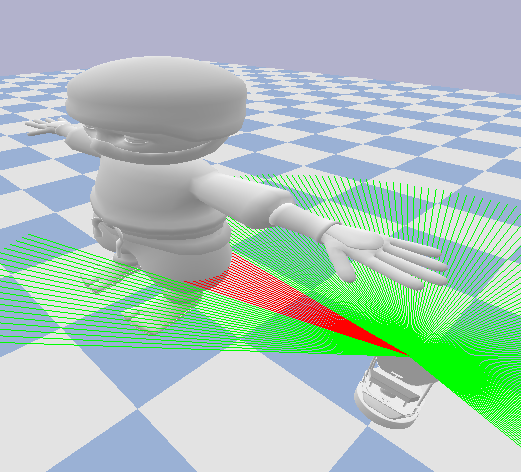}}
\\
\subcaptionbox{Dog\label{fig:3e}}{\includegraphics[width=0.24\linewidth]{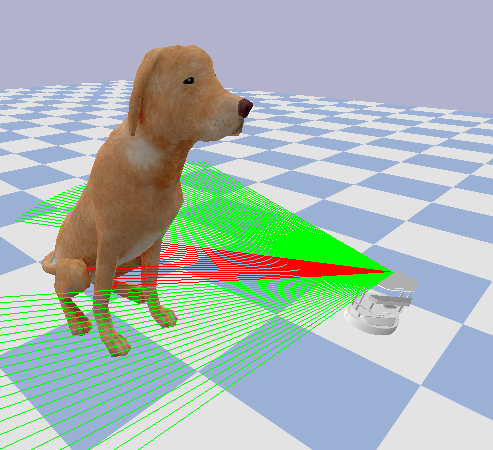}}%
\hfill
\subcaptionbox{Duck\label{fig:3f}}{\includegraphics[width=0.24\linewidth]{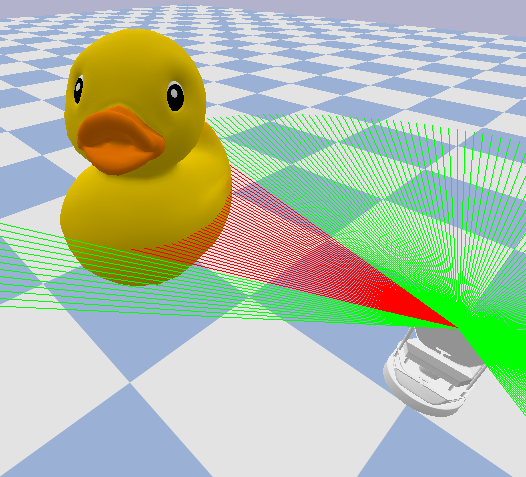}}%
\hfill
\subcaptionbox{Rabbit\label{fig:3g}}{\includegraphics[width=0.238\linewidth]{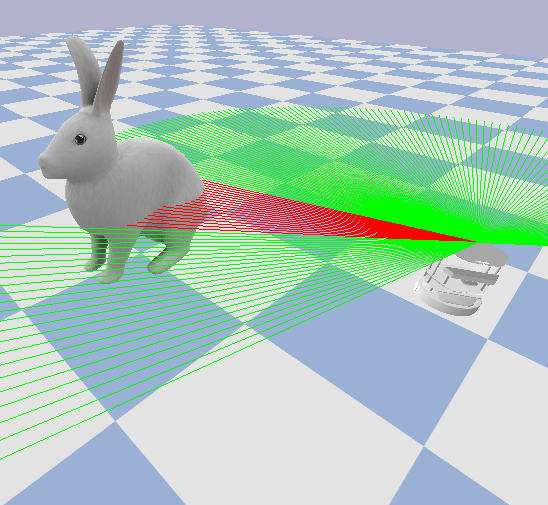}}%
\hfill
\subcaptionbox{Cat\label{fig:3h}}{\includegraphics[width=0.24\linewidth]{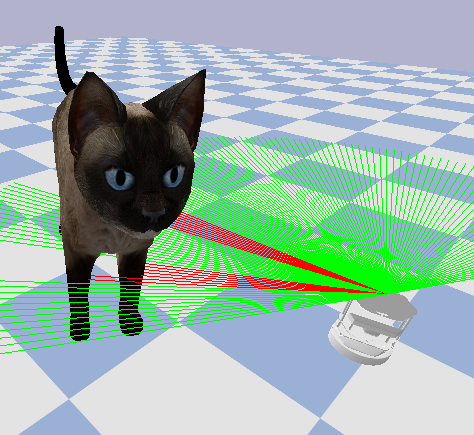}} \\
\caption{Object instances used to evaluate the estimation performance of online signed distance function approximation.}
\label{fig:sdf_result}
  \includegraphics[width=0.41\linewidth]{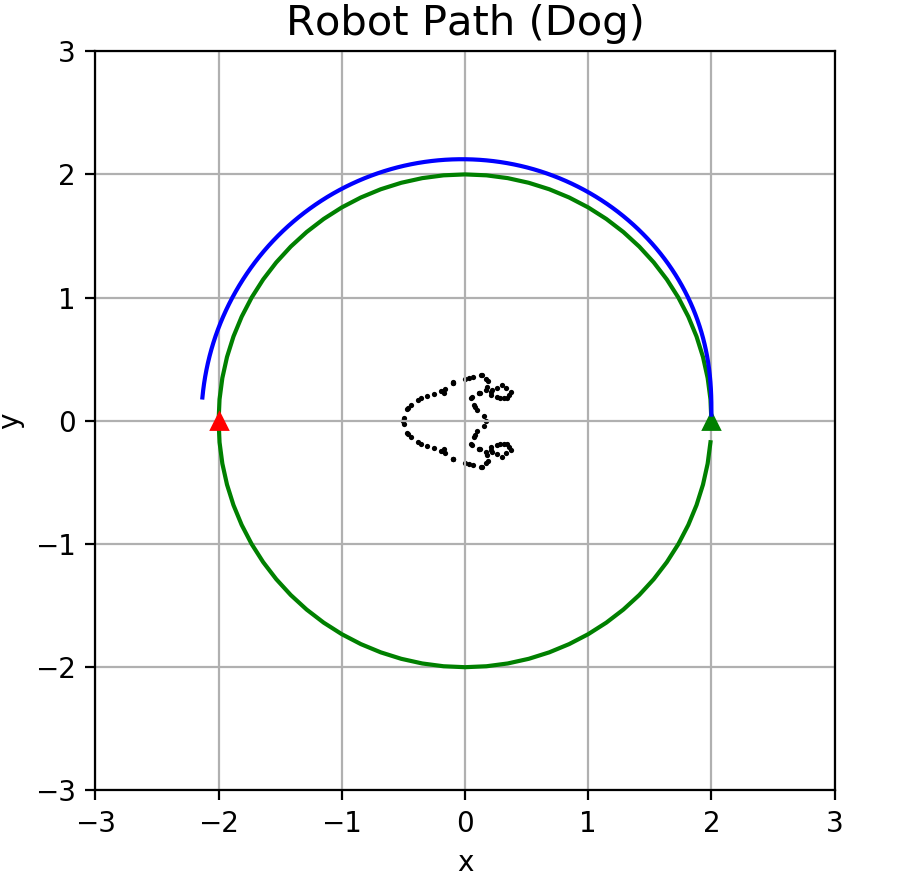}%
  \hfill%
  \includegraphics[width=0.57\linewidth]{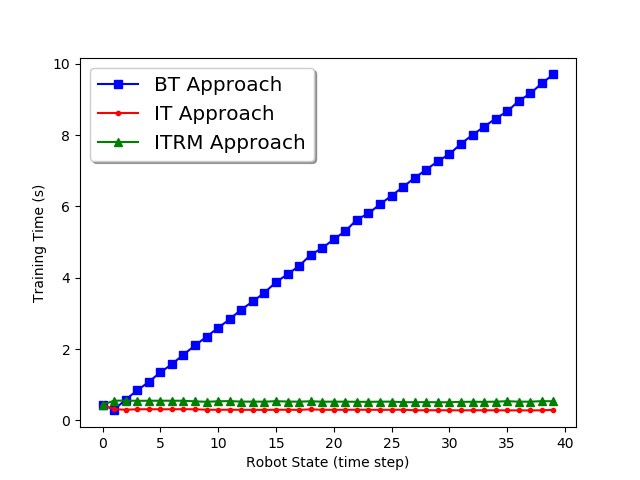}
  \caption{Robot motion (left) and training time (right) for the SDF estimation experiment in Sec.~\ref{sec:estimation_result}. The robot starts at $(2,0)$ and follows a circular reference path (green) to a goal position at $(-2,0)$ (red triangle). The actual trajectory followed by the robot is shown in blue. The black dots are points on the surface of a ground-truth Dog object and vary for different object instances. The average training times for the BT, IT, and ITRM methods for SDF approximation of the $8$ objects in Fig.~\ref{fig:sdf_result} are compared.}
  \label{fig:training_time}
\end{figure}

\begin{table}[t]
\centering
\caption{SDF estimation error \eqref{eq:valid_error} of the IT, ITRM, and BT training methods for LiDAR measurements with zero-mean Gaussian range noise with standard deviation $\sigma = 0.01$.}
\begin{tabular}{ |p{1.15cm}|p{2cm}|p{1.9cm}|p{1.9cm}| }
\hline
Case & $\mathcal{E}$ in IT & $\mathcal{E}$ in  ITRM  & $\mathcal{E}$ in  BT \\
\hline
1 (Ball) & $0.0287$  & $0.0148$ & $\mathbf{0.0140}$  \\
2 (Sofa)  & $0.0653$  & $\mathbf{0.0212}$ & $0.0215$ \\
3 (Table)  & $0.0564$ & $0.0230$ & $\mathbf{0.0227}$ \\
4 (Toy) & $0.1064$   & $0.0242$ & $\mathbf{0.0198}$\\
5 (Dog) & $0.0328$ & $0.0211$ & $\mathbf{0.0123}$\\
6 (Duck)  & $0.0751$  & $\mathbf{0.0102}$ & $0.0111$\\
7 (Rabbit)  & $0.0603$ & $0.0134$ & $\mathbf{0.0109}$\\
8 (Cat)  & $0.0292$ & $0.0149$ & $\mathbf{0.0123}$ \\
Average & $0.0568$ &$0.0179$ & $\mathbf{0.0147}$ \\
\hline
\end{tabular}
\label{table:LIDAR-w-noise}
\end{table}

\subsection{SDF Estimation Results}
\label{sec:estimation_result}
We compare the training time and prediction accuracy of the proposed ITRM approach for SDF estimation versus the IT and BT approaches described in Sec.~\ref{sec: incremental} for various 2-D obstacle shapes. Figs.~\ref{fig:sdf_result} and~\ref{fig:training_time} show our experiment setup. We used the multilayer perceptron architecture proposed by Gropp \textit{et al}. \cite{implicitsurface} and Park \textit{et al}. \cite{deepsdf} for ITRM training. The details of the neural network architecture are specified in Sec.~\ref{sec:network_archtecture}. We measure SDF approximation error as:
\begin{equation}
\label{eq:valid_error}
\mathcal{E} = \frac{1}{m}\sum_{i=1}^{m} |\tilde\varphi(\boldsymbol{y}_{i};\bftheta_k) |,   
\end{equation}
where $\{\boldsymbol{y}_i\}_{i=1}^{m}$ are $m = 500$ points uniformly sampled on the surface of the ground-truth object. The SDF error is shown in Table~\ref{table:LIDAR-w-noise} for the eight object instances in Fig.~\ref{fig:sdf_result}. The error of our ITRM approach is comparable with the error of the BT approach and is much smaller than the error of the IT approach. The training update time is the time needed for updating the network parameters from $\bftheta_{k-1}$ to $\bftheta_{k}$. The average training update time across the $8$ object instances for the three methods is shown in Fig.~\ref{fig:training_time}. We see that as the robot moves around the environment, the average training update time of the ITRM approach remains at about $0.7$s while the BT approach requires more and more time for training.

%

\begin{table}[t]
\centering
\caption{Training time and validation error of different neural network configurations for SDF approximation. The number of network layers, the number of training epochs, and whether a GPU is used are varied. The results are obtained using the settings in Table~\ref{table:LIDAR-w-noise} for the dog object shown in Fig.~\ref{fig:sdf_result}. The SDF training time includes both pre-processing of the LiDAR data and neural network training. We report the average time per LiDAR scan along the robot path. The SDF validation error is computed via \eqref{eq:valid_error}.}
{
\begin{tabular}{ |l|c|c|c|c| }
\hline
\multirow{3}{2cm}{Training Setting (number of layer, training epochs)} & \multicolumn{2}{l|}{Average Training Time} & \multicolumn{2}{l|}{SDF Validation Error}\\\cline{2-5}
& \multirow{2}{*}{GPU} & \multirow{2}{*}{CPU} & \multirow{2}{*}{GPU} & \multirow{2}{*}{CPU}\\
& & & &\\
\hline
 4, 5  &  $ \bold{0.0744} $& $0.197 $ & $0.0212$ & $0.0266$\\
 4, 10 & $0.143$ & $0.388 $ & $0.0215$ & $0.0207$\\
 4, 16 &  $0.224$ & $0.716$ & $0.0207$ & $0.0218$\\
 6, 5  &  $ 0.0978$ & $0.407$ & $\bold{0.0132}$ & $0.0215$\\
 6, 10 &  $0.193$ &$0.731$ & $0.0136$&$0.0193$ \\
 6, 16 & $ 0.312$&$1.258$ & $0.0149$&$0.0158$ \\
 8, 5  & $0.178$&$0.544$ & $0.0181$&$0.0208$  \\
 8, 10 & $0.325$&$1.127$ & $0.0201$&$0.0184$  \\
 8, 16 & $0.497$&$1.841$ & $ 0.0189$&$0.0156$ \\
\hline
\end{tabular}
}
\label{table:training_setting}
\end{table}

\subsection{Network Architecture and Training Implementation}
\label{sec:network_archtecture}

To accelerate the training time of the ITRM approach further to support online navigation, we  explore the trade-off between training time and accuracy for different neural network configurations. The baseline neural network used in Sec.~\ref{sec:estimation_result} has $8$ fully connected layers with $512$ neurons each and a single skip connection from the input to the middle layer. All internal layers use Softplus nonlinear activations. We set the parameter $\lambda = 0.1$ in \eqref{eq:loss}. At each time step~$t_k$, we trained the network for $16$ epochs with the ADAM optimizer \cite{Adam} with constant learning rate of $0.001$. We also experimented with neural network configurations with fewer layers ($4$ or $6$) and different number of training epochs ($5$, $10$, or $16$), while keeping fixed the rest of the architecture. As real-time navigation necessitates obstacle shape estimation to be as quick as possible, we aim to obtain the smallest network architecture, trained with as few epochs as possible, that still maintains good SDF prediction accuracy. We report training time and accuracy results using a GPU (one Nvidia Geforce 2080 Super) or a CPU (Intel i7 9700K) in Table~\ref{table:training_setting}. The best trade-off between training time and SDF error for a 2-D shape is obtained using a GPU to train a $6$ layer network for $5$ epochs. This configuration enables training times of less than $0.1$s, suitable for real-time navigation.

\subsection{Safe Navigation Results}
\label{sec:navigation_results}

\begin{figure}[t]
\centering
\subcaptionbox{Environment 1\label{fig:5a}}{\includegraphics[width=0.47\linewidth]{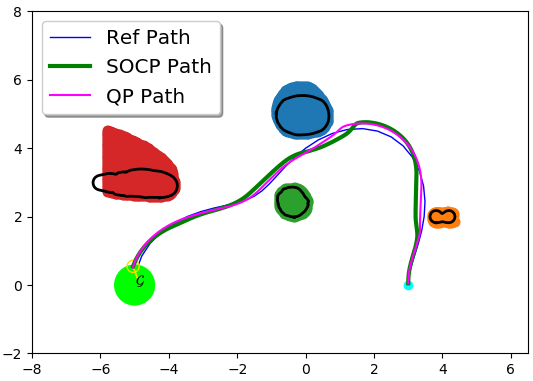}}%
\hfill%
\subcaptionbox{Environment 2\label{fig:5b}}{\includegraphics[width=0.47\linewidth]{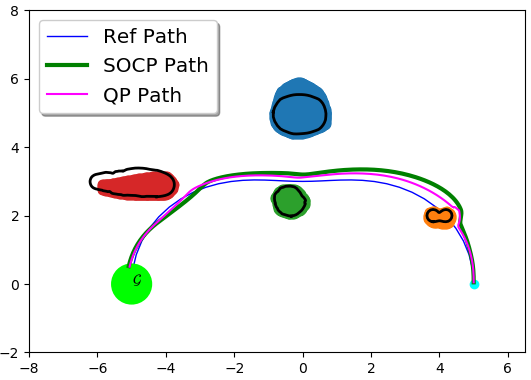}}\\
\subcaptionbox{Environment 6\label{fig:5c}}{\includegraphics[width=0.47\linewidth]{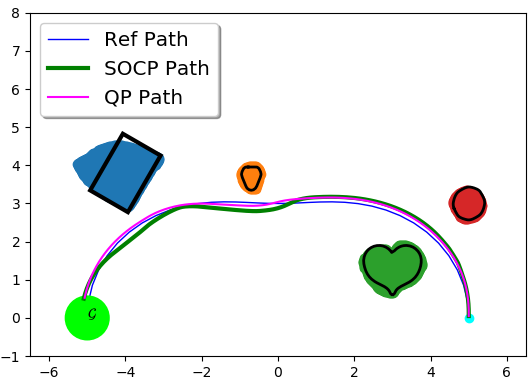}}%
\hfill%
\subcaptionbox{Environment 7\label{fig:5d}}{\includegraphics[width=0.47\linewidth]{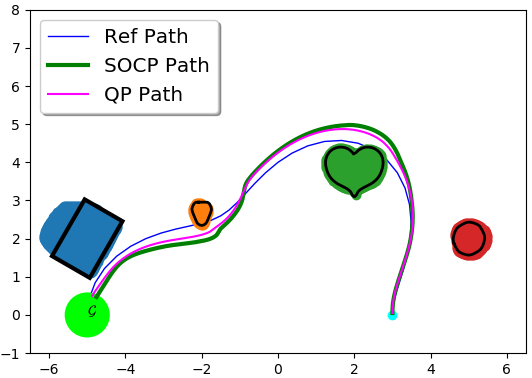}}\\
\caption{Simulation results for the CLF-CBF-SOCP and CLF-CBF-QP controllers. The reference path is shown in blue. The ground-truth obstacle surfaces are shown in black. The estimated obstacles, obtained after the whole path is traversed by the CLF-CBF-SOCP controller are shown in different colors (red, green, blue, orange). The trajectory generated by CLF-CBF-SOCP is shown in green, while the trajectory generated by CLF-CBF-QP is shown in pink. The starting point is cyan and the goal region is a light green circle.} 
\label{fig: result1}
\vspace*{-2ex}
\end{figure}


The second set of experiments demonstrates safe trajectory tracking using online SDF obstacle estimates to define constraints in the CLF-CBF-SOCP control synthesis optimization in \eqref{eq:SOCP_formulation}. The robot moves along a reference path while avoiding unknown obstacles in 8 different environments, similar to the one shown in Fig.~\ref{fig:1}. To account for the fact that the robot body is not a point mass, we subtract the robot radius $\tau = 0.177$ from the SDF estimate when defining the CBF: $\tilde{h}_i(\boldsymbol{x}) = \tilde{\varphi}_i(\boldsymbol{y};\bftheta) - \tau$. The error bounds of CBF $e_h(\boldsymbol{x})$ and its gradient $e_{\nabla h}(\boldsymbol{x})$ are approximated based on Table~\ref{table:LIDAR-w-noise}. We compare our CLF-CBF-SOCP approach to a CLF-CBF-QP. To account for estimation errors, it is possible to inflate the CBFs in the CLF-CBF-QP formulation by both the robot radius and the SDF approximation error, $\tilde{h}_i(\boldsymbol{x}) = \tilde{\varphi}_i(\boldsymbol{y};\bftheta) - \tau - e_h(\boldsymbol{x})$. This preserves linearity of the CBF constraints and leads to a more conservative controller. Comparing with \eqref{eq:SOCP_formulation}, we can see that the CLF-CBF-SOCP formulation accounts for both direct and gradient errors in the CBF approximations, and naturally reduces to a QP if $e_{\nabla h}(\boldsymbol{x})$ is zero. To emphasize the importance of accounting for estimation errors, we compare to a CLF-CBF-QP that assumes the estimated CBFs $\tilde{h}_i(\boldsymbol{x})$ are accurate and ignores the estimation errors.



To avoid low velocities, we set diagonal of $L(\bfx)$ in Sec.~\ref{sec:socp_frame} as $l_1 = 10$, $l_2 = 1$, $l_3 = 10\sqrt{10}$, where $l_1$,$l_2$,$l_3$ are the penalty parameters for linear velocity, angular velocity and path following, respectively. If there is no solution found at some time step, $l_1$ is divided by $\sqrt{2}$ until a feasible solution found. We use the Fr\'echet distance between the reference path $r$ and the paths produced by the CLF-CBF-SOCP and CLF-CBF-QP controllers to evaluate their similarities. For paths $\displaystyle{A}$, $\displaystyle{B}$, the Fr\'echet distance is computed by: 
\begin{equation}
    F(A,B) = \inf_{\alpha, \beta} \max_{t \in [0,1]} \left\{ d(\displaystyle{A(\alpha(t))}, \displaystyle{B(\beta (t))}) \right\} ,
    \label{eq: fre_distance}
\end{equation}
where $\alpha,\beta: [0,1] \mapsto [0,1]$ are continuous, non-decreasing, surjections and $d$ is the Euclidean distance in our case.


\begin{figure}[t]
\centering
\subcaptionbox{Environment 3\label{fig:6a}}{\includegraphics[width=0.47\linewidth]{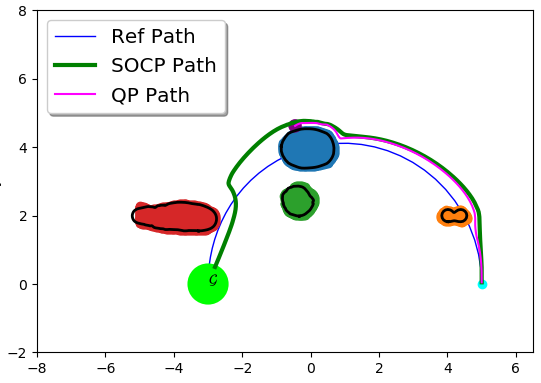}}%
\hfill
\subcaptionbox{Zoom out of (a)\label{fig:6b}}{\includegraphics[width=0.47\linewidth]{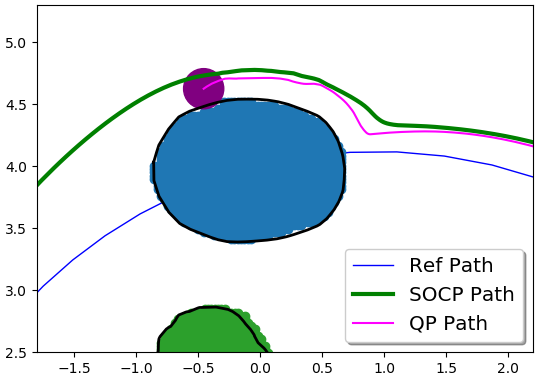}}\\
\subcaptionbox{Environment 8\label{fig:6c}}{\includegraphics[width=0.47\linewidth]{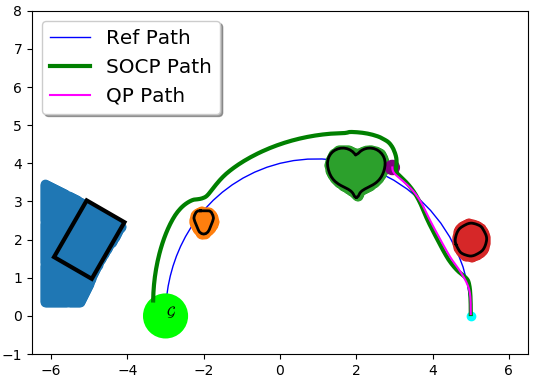}}%
\hfill
\subcaptionbox{Zoom out of (c)\label{fig:6d}}{\includegraphics[width=0.47\linewidth]{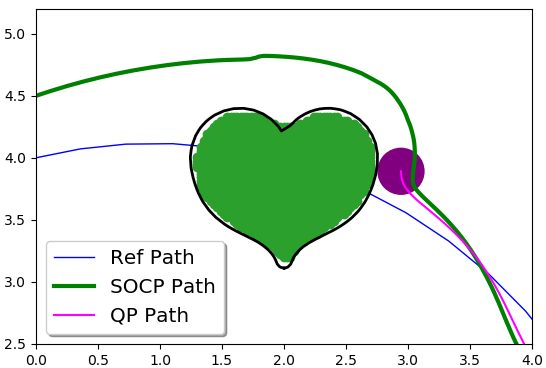}}
\caption{Simulation results in environments where the CLF-CBF-SOCP controller succeeds but CLF-CBF-QP one fails. The robot is shown as a purple circle when crashing into an obstacle using the CLF-CBF-QP controller. Fig.~\ref{fig:6b} and Fig.~\ref{fig:6d} show enlargements of the crash regions in Fig.~\ref{fig:6a} and Fig.~\ref{fig:6c}, respectively.}\label{fig: result_qp_failed}
\vspace*{-2ex}
\end{figure}

In Fig.~\ref{fig: result1}, the robot can follow the prescribed reference path if no obstacles are nearby.  The green trajectory generated by CLF-CBF-SOCP is always more conservative than the pink trajectory since it takes the errors in the CBF estimation into account. When there is an obstacle near or on the reference path, the robot controlled by the CLF-CBF-SOCP controller stays further away from the obstacles than the robot controlled by the CLF-CBF-QP controller. This agrees with the Fr\'echet distance results presented in Table~\ref{table:traj}. In Fig.~\ref{fig: result_qp_failed}, we see that the CLF-CBF-QP controller sometimes fails to avoid obstacles because it does not consider the errors in the CBF estimation. In contrast, the CLF-CBF-SOCP controller is guaranteed by Prop.~\ref{pro:socp} to remain safe if the CBF estimation is captured correctly in the SOC constraints.


In summary, Table \ref{table:traj} indicates that the trajectory mismatch with respect to the reference path is larger under our CLF-CBF-SOCP controller than under the CLF-CBF-QP controller if they both succeed. However, our approach guarantees safe navigation, while the CLF-CBF-QP controller sometimes causes collisions due to errors in CBF estimation.

\begin{table}[t]
\centering
\caption{Fr\'echet distance between the reference path and the robot trajectories generated by the CLF-CBF-SOCP and the CLF-CBF-QP controllers (smaller values indicate larger trajectory similarity, N/A indicates that the robot failed to reach the goal region).}
\begin{tabular}{ |p{1.4cm}|p{2.5cm}|p{2.5cm}| }
\hline
Environment & CLF-CBF-SOCP & CLF-CBF-QP\\\hline
1 & $\mathbf{0.3428}$ & $0.3442$\\
2  & $0.5339$ & $\mathbf{0.3971}$ \\ 
3   & $0.8356$ & N/A \\
4 & $0.3968$ & $\mathbf{0.3492}$ \\
5  & $1.0535$ & N/A \\
6   & $0.3611$ & $\mathbf{0.3444}$\\
7 & $0.4643$ & $\mathbf{0.3493}$\\
8  &  $0.8674$ & N/A  \\\hline 
\end{tabular}
\label{table:traj}
\vspace*{-1ex}
\end{table}





\section{Conclusion}

We introduced an incremental training approach with replay memory to enable online estimation of signed distance functions and construction of corresponding safety constraints in the form of control barrier functions. The use of replay memory balances training time with estimation error, making our approach suitable for real-time estimation. We showed that accounting for the direct and gradient errors in the CBF approximations leads to a new CLF-CBF-SOCP formulation for safe control synthesis. Future work will consider capturing localization and robot dynamics errors in the safe control formulation and will apply our techniques to real autonomous navigation experiments.

\bibliographystyle{ieeetr}
\bibliography{./bib/ref.bib}

\end{document}